\newtheorem{theorem}{Theorem}
\newtheorem{definition}{Definition}
\newtheorem{remark}{Remark}
\newtheorem{lemma}{Lemma}
\title{SSDiff: Spatial-spectral Integrated Diffusion Model for Remote Sensing Pansharpening}
\author{
    Anonymous Author(s)
    \emails
    Paper Id: 1395
}
\author{
Yu~Zhong\footnote{: Equal contribution.}
\and
Xiao~Wu\footnotemark[1]\and
Liang-Jian~Deng\footnote{: Corresponding author.}\And
Zihan~Cao\\
\affiliations
University of Electronic Science and Technology of China\\
\emails
yuuzhong1011@gmail.com,
wxwsx1997@gmail.com,
liangjian.deng@uestc.edu.cn,
iamzihan666@gmail.com
}
\begin{document}

\maketitle



\begin{abstract}
    Pansharpening is a significant image fusion technique that merges the spatial content and spectral characteristics of remote sensing images to generate high-resolution multispectral images. Recently, denoising diffusion probabilistic models have been gradually applied to visual tasks, enhancing controllable image generation through low-rank adaptation (LoRA). In this paper, we introduce a spatial-spectral integrated diffusion model for the remote sensing pansharpening task, called SSDiff, which considers the pansharpening process as the fusion process of spatial and spectral components from the perspective of subspace decomposition. Specifically, SSDiff utilizes spatial and spectral branches to learn spatial details and spectral features separately, then employs a designed alternating projection fusion module (APFM) to accomplish the fusion. Furthermore, we propose a frequency modulation inter-branch module (FMIM) to modulate the frequency distribution between branches. The two components of SSDiff can perform favorably against the APFM when utilizing a LoRA-like branch-wise alternative fine-tuning method. It refines SSDiff to capture component-discriminating features more sufficiently. Finally, extensive experiments on four commonly used datasets, i.e., WorldView-3, WorldView-2, GaoFen-2, and QuickBird, demonstrate the superiority of SSDiff both visually and quantitatively. The code will be made open source after possible acceptance.
\end{abstract}

\section{Introduction}
Due to physical limitations, satellite sensors cannot directly acquire high-resolution multispectral images (HrMSI). Instead, they can obtain high-resolution panchromatic (PAN) images and low-resolution multispectral images (LrMSI). Pansharpening techniques can merge PAN images with LrMSI, generating HrMSI that possess both high spatial and spectral resolutions. Pansharpening, as a fundamental preprocessing method, has been widely utilized in various applications, including change detection~\cite{wu2017post} and image segmentation~\cite{yuan2021review}.

\begin{figure}[t]
    \centering
    \includegraphics[width=\linewidth]{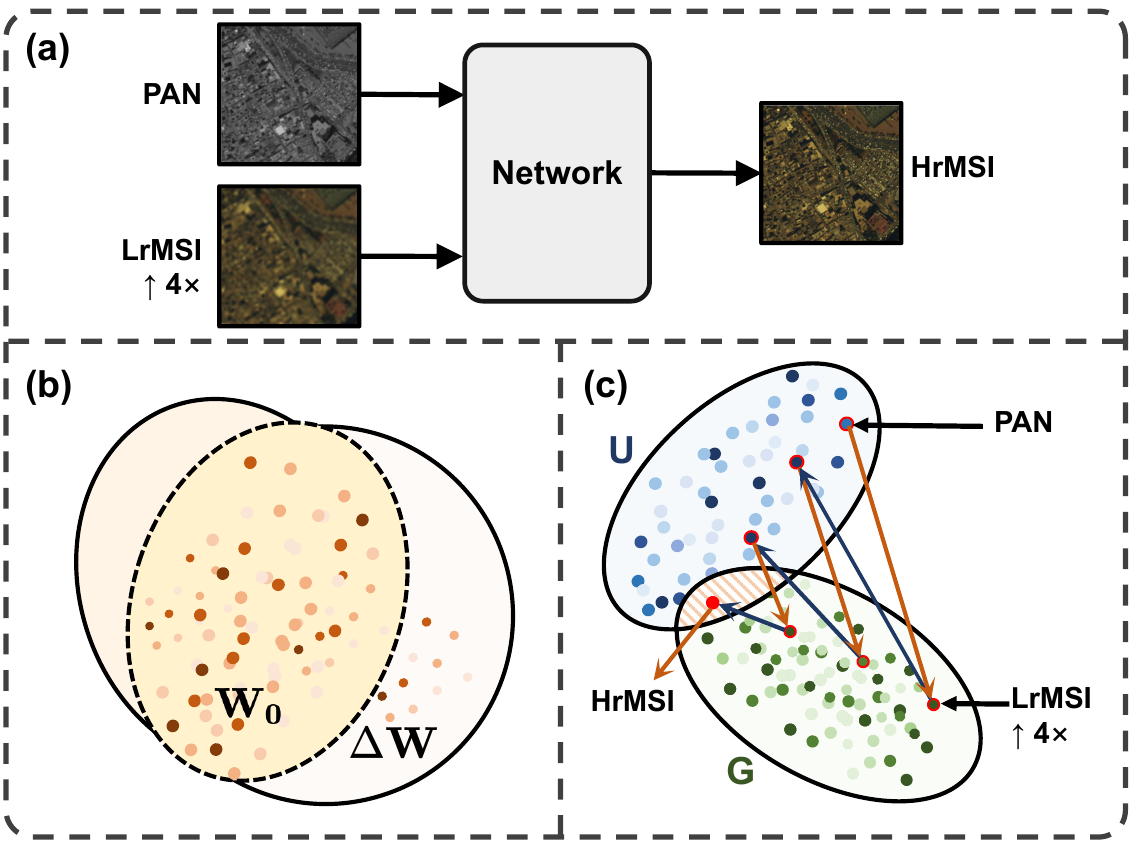}
    \caption{
    Schematic of (a) DL-based pansharpening approach in a supervised fashion, in which the ``network'' can be any deep module, e.g., denoising diffusion probabilistic models (DDPM). 
    The comparison of (b) the LoRA based on DDPM and (c) the proposed APFM in our SSDiff. $\mathbf{G}$ and $\mathbf{U}$ represent the spectral and spatial domains, respectively. The LoRA can expand learnable weights $\mathbf{W_0}$ with $\Delta \mathbf{W}$ (but without applications to pansharpening), and the given APFM can obtain pansharpened HrMSI from PAN image and LrMSI through alternating projections.
    }
    \label{fig: first pic}
\end{figure}


Pansharpening methods are roughly categorized into four types: component substitution (CS) methods, multi-resolution analysis (MRA) methods, variational optimization (VO) techniques, and deep learning (DL) methods, as shown in Fig.~\ref{fig: first pic} (a). The CS method~\cite{kwarteng1989extracting,meng2016pansharpening} involves projecting the LrMSI into a specific domain and replacing the spatial components of the LrMSI in that domain with the corresponding components from the PAN image. CS-based methods can generate fusion images with high spatial fidelity but spectral distortion with fast runtime. MRA-based methods~\cite{otazu2005introduction,vivone2017regression} extract spatial details from the PAN image through multiscale decomposition and inject them into the LrMSI. While MRA-based methods effectively preserve spectral information, they may sacrifice spatial details. Compared to CS-based and MRA-based methods, VO-based techniques~\cite{wu2023lrtcfpan,wu2023framelet} gain more mathematical guarantees but require handling a higher computational burden and more adjustable parameters.

In recent years, DL-based methods~\cite{dcfnet,DICNN} have increasingly been applied to pansharpening tasks, yielding exciting results. Traditional DL-based methods typically utilize a single-scale model to process information from PAN images and LrMSI. In a single-scale network, PAN images and LrMSI are usually stacked together without distinguishing the information contained within them, serving as inputs to the network.
They overlook the disparities in the deep-level information inherent in both, potentially leading to the omission of crucial discriminative features and subsequently influencing lower fusion performance. Then, dual-branch methods~\cite{dengbidirectional,liang2022pmacnet} based on deep learning can differentiate and hierarchically learn information from PAN and LrMSI. Thanks to this design, it has shown outstanding performance in pansharpening tasks. However, the cumbersome structure of dual-branch networks makes it challenging to perform localized fine-tuning. Denoising diffusion probabilistic model (DDPM) \cite{ho2020denoising} is attaining attention in remote sensing pansharpening \cite{meng2023pandiff,cao2023ddif}. Unfortunately, existing DDPM-based methods have not yet designed models for the discriminative features required in the pansharpening task.

Considering the characteristics of the pansharpening task, we propose a novel SSDiff method based on subspace decomposition, which leverages spatial and spectral branches to discriminatively capture global spatial information and spectral features, respectively. Additionally, we further construct an alternating projection fusion module (APFM) to fuse the captured spatial and spectral components. Besides, a frequency modulation inter-branch module (FMIM) is designed to overcome the problem of uneven distribution of frequency information between two branches in the denoising process. Finally, through the proposed LoRA-like branch-wise alternating fine-tuning (L-BAF), our SSDiff can further reveal spatial and spectral information not discovered in each branch. The contributions of this work include three folds, as follows:


\begin{itemize}
    \item[$\bullet$] Our SSDiff is based on subspace decomposition to divide the network into spatial and spectral branches. In addition, for subspace decomposition, we give an illustration of vector projection and construct an alternating projection fusion module (APFM). APFM transforms the process of fusing HrMSI into the fusion process of spatial and spectral components. Moreover, our SSDiff is tested on four widely used pansharpening datasets and achieves state-of-the-art (SOTA) performance.
    
    \item[$\bullet$] The frequency modulation inter-branch module is used at the junction of spectral and spatial branches to enrich extracted spatial information with more high-frequency information in the denoising process.

    \item[$\bullet$] The proposed L-BAF method is used to fine-tune the proposed APFM, where the spatial and spectral branches are updated alternately. This design allows us to alternately fine-tune the two branches without increasing the parameter count, enabling the learning of more discriminative features.
\end{itemize}

\section{Related Works}

\subsection{DL-based Methods}
As a simple but effective method, the representative single-scale coupling model, namely PNN~\cite{pnn}, first proposes a simple and effective three-layer CNN architecture and achieves the best results at that time. Subsequently, other methods such as FusionNet~\cite{deng2020detail}, DCFNet~\cite{dcfnet}, and others adopt similar coupled input approaches and successfully design their networks. However, these methods still have significant room for improvement in spectral fidelity and generalization performance due to weak feature representation in their network structure designs. In multi-source image fusion tasks, images acquired from diverse sources exhibit varying characteristics. Coupling two information sources together may suffer from inadequate feature extraction. Then, spatial and spectral branches methods~\cite{dengbidirectional,liang2022pmacnet,peng2023u2net} based on deep learning can differentiate and hierarchically learn information from PAN images and LrMSI. These methods can better exploit the potential advantages of multi-scale information.

\subsection{Diffusion-based Model}
DDPM, as a generative model, has been widely applied in various domains such as text-to-image generation~\cite{ruiz2023dreambooth} and image editing~\cite{kawar2023imagic}. In recent years, DDPM has shown its prominence in image processing tasks~\cite{gao2023implicit,song2020denoising}. Among them, 
Song et al.~\cite{song2020denoising} propose denoising diffusion implicit models (DDIM), where they design a non-Markov chain sampling process, accelerating the sampling of diffusion models. Then, through some simple modifications, IDDPM~\cite{nichol2021improved} achieves competitive log-likelihoods while preserving the high sample quality of DDPM. Currently, DDPM is attracting attention in the field of pansharpening~\cite{meng2023pandiff,cao2023ddif}. These DDPM-based methods treat PAN and LrMSI as model fusion conditions, unlike other pansharpening methods where they serve as fusion targets.

\section{Background}
\subsection{Denoising Diffusion Probabilistic Models}
Denoising diffusion probabilistic models~\cite{ho2020denoising} are latent variable models, generating realistic target images progressively from a normal distribution by iterative denoising. The diffusion model contains two steps: forward and reverse processes.

The forward process aims to make the prior data distribution $\mathbf {x_0}$ noisy by a $T$ step Markov chain that gradually transforms it into an approximate standard normal distribution $\mathbf {x}_T\sim \mathcal{N} (0,\mathbf {I_d})$ and $\mathbf {d}$ denotes the dimension. One forward step is defined as follows:
\begin{align}
q(\mathbf {x}_{t}|\mathbf {x}_{t-1})=\mathcal{N} (\mathbf {x}_t;\sqrt{1-\beta _t}\mathbf {x}_{t-1},\beta _t\mathbf{I}),
\label{eq: forstep}
\end{align}
where $\mathcal{N}(\cdot)$ is a Gaussian distribution with the mean of $\sqrt{1-\beta _t}\mathbf {x}_{t-1}$ and variance of $\beta _t\mathbf {I}$, $\beta _t$ is a pre-defined variance schedule in time step $t\in \left [ 0, T \right ]$.
Through the reparameterization trick, we can derive $\mathbf {x}_{t}$ directly from $\mathbf {x}_0$, The following equation gives this derivation:
\begin{align}
q(\mathbf {x}_{t}|\mathbf {x}_{0})=\sqrt{\bar{\alpha}_t} \mathbf {x}_{0}+\sqrt{1-\bar{\alpha}_t}\epsilon \label{eq: forstep1},
\end{align}
where $\mathbf {\epsilon}\sim \mathcal{N} (0,\mathbf {I})$ and $\alpha_t=1-\beta_t$, $\bar{\alpha}_t= { \prod_{i=0}^{t}\alpha_i}$.

The reverse process aims to learn to remove the degradation brought from the forward process and sample the $\mathbf {x}_{0}$ from $\mathbf {x}_{t}$. To accomplish this objective, we need to learn the distribution of $p_\theta(\mathbf {x}_{t-1}|\mathbf {x}_{t})$ using a neural network and perform iterative sampling as follows:
\begin{align}
p_\theta(\mathbf {x}_{t-1}|\mathbf {x}_{t})=\mathcal{N} (\mathbf {x}_{t-1};\mu_\theta(\mathbf{x}_t, t), \Sigma_{\theta}(\mathbf{x}_t, t)), 
\label{eq: revprocess}
\end{align}
where $\mu_\theta$ and $\Sigma_{\theta}$ are the mean and variance of $p_\theta(\mathbf {x}_{t-1}|\mathbf {x}_{t})$, respectively, and $\theta$ is the parameters of model.

According to Eq. \eqref{eq: revprocess}, the mean and variance can be computed, following:
\begin{align}
\mu _\theta =\frac{1}{\sqrt{\alpha_t}}
(\mathbf{x}_t-\frac{\beta _t}{\sqrt{1-\bar{\alpha }_t}}\epsilon _\theta (\mathbf{x}_t ,t)) ,
\label{eq: mu}
\end{align}
\begin{align}
\Sigma_\theta (\mathbf{x}_t,t)=\frac{1-\bar{\alpha}_{t-1} }{1-\bar{\alpha}_{t}} \beta _t.
\label{eq: sigma}
\end{align}
For sampling from a standard Gaussian noise $\mathbf{x}_T$ to get $\mathbf{x}_{T-1}$, after performing $T$-step iterations of sampling as described above, we get the output $\mathbf{x}_0$ from $\mathbf{x}_T$.

\subsection{LoRA: Low-rank Adaptation of Large Language Models}
For the fine-tuning of parameters in large pre-trained models, Hu et al.~\cite{hu2021lora} introduce the LoRA to freeze the pre-trained model weights and inject trainable low-rank decomposition matrices into each layer of the Transformer architecture. This significantly reduces the number of trainable parameters for downstream tasks. For $\mathbf{H} = \mathbf{W_0}\mathbf{X}$, the modified forward pass of the LoRA follows the formula:
\begin{align}
\mathbf{H} = \mathbf{W_0}\mathbf{X}+\Delta \mathbf{W}\mathbf{X} = \mathbf{W_0}\mathbf{X}+\mathbf{BAX},
\label{eq: lora}
\end{align}
where $\mathbf{W_0}\in \mathbb{R}^{d\times k}$ is a pre-trained weight matrix, $\mathbf{X} \in \mathbb{R}^{k \times n}$, $\mathbf{B}\in \mathbb{R}^{d\times r}$, $\mathbf{A}\in \mathbb{R}^{r\times k}$, and the rank $r \ll \text{min}(d, k)$. Actually, $\mathbf{W_0}+\Delta\mathbf{W} = \mathbf{W_0}+\mathbf{BA}$ represents a low-rank decomposition.

\begin{figure*}[t]
    \centering
    \includegraphics[width=0.95\linewidth]{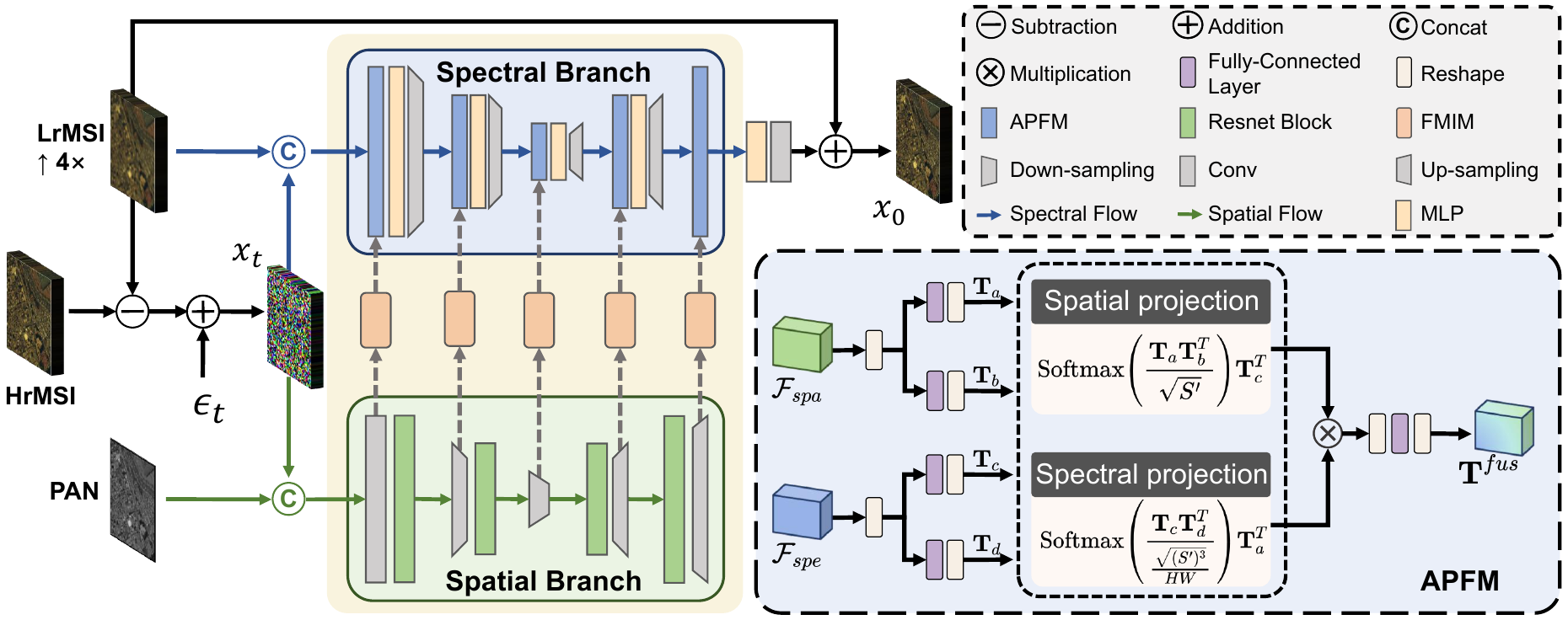}
    \caption{Overall framework of the proposed SSDiff. $\epsilon_t = \sqrt{1-\bar{\alpha}_t}\epsilon$ is a Gaussian noise, where $t$ is the time step. $\mathcal{F}_{spa}$ is the output of the spatial branch, and $\mathcal{F}_{spe}$ is the output of the spectral branch. The process of APFM follows Theorem~\ref{APF}. }
    \label{fig: model}
\end{figure*}

\subsection{Motivation}
PAN images and LrMSI are obtained from different sensors and contain distinct feature information. PAN images exhibit richer spatial details, while LrMSI possesses more abundant spectral information. However, existing DDPM-based methods have not yet designed models specifically for the discriminative features required in the pansharpening task. As a result, these methods suffer from issues such as insufficient feature learning and generalization capabilities, though they may employ low-rank adaptation (LoRA) to improve the performance of DDPM for the pansharpening task.

To solve these problems, we propose SSDiff, which transforms the problem of solving HrMSI into a fusion problem of spatial and spectral components. Significantly, we give an illustration of linear algebra to remove the gap between subspace decomposition and the self-attention mechanism. The SSDiff utilizes vector projection to discriminatively capture global spatial information and spectral features in spatial and spectral branches. By introducing subspace decomposition, we can further illustrate and generalize the vector projection to the matrix form. Based on this, we propose an APFM that naturally decouples spatial and spectral information and fuses the captured features. Unlike the LoRA method, the APFM can establish low-rank representations for the spatial-spectral branches more accurately, as shown in Fig.~\ref{fig: first pic}. Furthermore, the spectral branch contains abundant low-frequency information. When low- and high-frequency information from the spatial branch is injected into the spectral branch, it may result in an overemphasis on low-frequency information, impacting the denoising performance of the model. Based on this, we propose FMIM to modulate the frequency information between different branches. The overall model is trained by L-BAF to uncover spatial and spectral information not discovered in each branch.

\newcommand{\bsy}[1]{\boldsymbol{#1}}

\begin{algorithm}
        \caption{Training stage of the proposed method.}
        \label{alg. train}
        \KwData{GT image $\mathbf x_0$, diffusion model $\mathbf x_\theta$ with its parameters $\theta$, spectral and spatial branch parameter $\theta_{spe}$, $\theta_{spa}$, respectively, condition $cond$, timestep $t$, and denoised objective $\hat{\mathbf{x}}_0$.}
        \KwResult{Optimized diffusion model $\mathbf{x}^*_\theta$.}
        $ \mathbf {cond}\leftarrow \text{PAN},
        \text{LrMSI}, \mathbf{x_t}$;\\
        \While{until convergence}{
        $t\leftarrow\ $Uniform($0, T$); $\epsilon \sim \mathcal{N}(0, \bsy{I})$;\\
        $\mathbf x_t\leftarrow \sqrt{\bar \alpha_t} (\mathbf x_0- \text{LrMSI})+\sqrt{1-\bar \alpha_t}\epsilon$;\\
        $\hat{\mathbf x}_0\leftarrow \mathbf{x_\theta}(\mathbf x_t, \mathbf {cond}) + \text{LrMSI}$;\\
        \If {iteration $>$ 150k} {fine-tune $\theta_{spe}$ or $\theta_{spa}$; \tcp{L-BAF}}
        $\theta \leftarrow \nabla_\theta \mathcal{L}_{simple}(\hat{\mathbf x}_0, \mathbf x_0)$. \\
        }
\end{algorithm}


\section{Methodology}
\subsection{SSDiff Architecture}


Inspired by the LoRA approach, we view the pansharpening task as the fusion of spatial and spectral components, where the spatial and spectral elements can be considered as a matrix decomposition of a multi-spectral image. Based on these characteristics, our SSDiff employs a model comprising a spatial branch and a spectral branch, as shown in Fig.~\ref{fig: model}. Both the spatial branch and the spectral branch comprise two encoder layers and two decoder layers. Down-sampling occurs between the encoder layers to decrease the spatial resolution while increasing channel numbers. Up-sampling operation between the two layers of the decoder to increase the spatial resolution while decreasing the number of channels, and the middle of the encoder and the decoder are connected by a down-sampling convolution layer. The spatial branch employs ResNet~\cite{he2016deep} blocks to handle spatial images into features. These spatial features are transmitted to corresponding layers in the spectral branch via a frequency modulation inter-branch module. Additionally, fusing incoming spatial features and spectral information via an alternating projection fusion module.

Eventually, it is delivered to the next stage via an MLP. In this work, we convert the objective from $\epsilon$ to $x_0$, so the loss function $\mathcal{L}_{simple}$~\cite{cao2023ddif} takes the following form:
\begin{align}
\mathcal{L}_{simple}=\mathbb{E}\left [ \left \| x_0 - x_\theta (\mathbf{x}_t, \mathbf{c} , t ) \right \|_1  \right ],
\label{eq: ourloss}
\end{align}
where $x_\theta$ denotes the prediction of the model and $\mathbf{c}$ is the conditions for injecting the model.
Inspired by FusionNet \cite{deng2020detail}, our SSDiff changes the forward and backward denoising objects during the training process from HrMSI to the difference between HrMSI and up-sampled LrMSI. 
The detailed training process of SSDiff can be found in Algorithm~\ref{alg. train}.

\begin{figure}[t]
    \centering
    \includegraphics[width=0.95\linewidth]{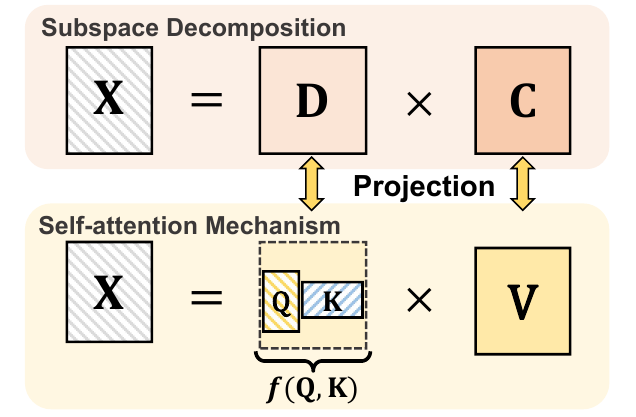}
    \caption{Schematic diagram of the relationship between subspace decomposition and self-attention mechanism. $f(\mathbf{Q}, \mathbf{K})$ is the classic self-similarity equation in self-attention mechanism. }
    \label{fig: projection}
\end{figure}

\subsection{Alternating Projection Fusion Module}

This section starts with vector projection in linear algebra (See Lemma~\ref{vector_proj}) and subspace decomposition (See Definition~\ref{subspace}) to illustrate vector projection as a specific subspace decomposition. Then, the self-attention mechanism~\cite{vaswani2017attention} is generalized into the proposed alternating projection fusion framework, i.e., the alternating projection fusion module (APFM). In what follows, we rewrite vector projection as follows.


\begin{lemma}[\cite{strang2022introduction}]\label{vector_proj}
Assuming that the existing two arbitrary vectors $\mathbf{a} \in \text{dom}\mathbf{U} \in \mathbb{R}^{n}$ and $\mathbf{b} \in \text{dom}\mathbf{G} \in \mathbb{R}^{n}$, then $\mathbf{P}\mathbf{b}=\lambda \mathbf{a}=\mathbf{p}$, we have the following formula:
\begin{align}
\mathbf{p}=\frac{\mathbf{a}\mathbf{a}^T}{\mathbf{a}^T\mathbf{a}}\mathbf{b}.
\label{eq: Pb}
\end{align}
where $\mathbf{P}$ is a projection matrix, $\lambda$ denotes the scaling factor, and $\mathbf{p}$ is the vector in the same domain as $\mathbf{a}$.
\end{lemma}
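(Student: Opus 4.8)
The plan is to derive the projection formula directly from the defining conditions of orthogonal vector projection, treating Lemma~\ref{vector_proj} as the standard statement that the projection $\mathbf{p}$ of $\mathbf{b}$ onto the line spanned by $\mathbf{a}$ is $\mathbf{p}=\lambda\mathbf{a}$ where $\lambda$ is chosen so that the error $\mathbf{b}-\mathbf{p}$ is orthogonal to $\mathbf{a}$. First I would write $\mathbf{p}=\lambda\mathbf{a}$ as given and impose the orthogonality condition $\mathbf{a}^T(\mathbf{b}-\lambda\mathbf{a})=0$. Solving this scalar equation yields $\lambda = \dfrac{\mathbf{a}^T\mathbf{b}}{\mathbf{a}^T\mathbf{a}}$, which is well-defined since $\mathbf{a}\neq\mathbf{0}$ (so $\mathbf{a}^T\mathbf{a}>0$).

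Next I would substitute this value of $\lambda$ back into $\mathbf{p}=\lambda\mathbf{a}$ to get $\mathbf{p} = \dfrac{\mathbf{a}^T\mathbf{b}}{\mathbf{a}^T\mathbf{a}}\,\mathbf{a}$. The only remaining step is the associativity/commutativity rearrangement: since $\mathbf{a}^T\mathbf{b}$ is a scalar, we have $\mathbf{a}\,(\mathbf{a}^T\mathbf{b}) = (\mathbf{a}\mathbf{a}^T)\,\mathbf{b}$, so that
\begin{align}
\mathbf{p} = \frac{\mathbf{a}(\mathbf{a}^T\mathbf{b})}{\mathbf{a}^T\mathbf{a}} = \frac{(\mathbf{a}\mathbf{a}^T)\mathbf{b}}{\mathbf{a}^T\mathbf{a}} = \frac{\mathbf{a}\mathbf{a}^T}{\mathbf{a}^T\mathbf{a}}\,\mathbf{b},
\end{align}
which is exactly Eq.~\eqref{eq: Pb}, with the projection matrix identified as $\mathbf{P} = \dfrac{\mathbf{a}\mathbf{a}^T}{\mathbf{a}^T\mathbf{a}}$ and the scaling factor as $\lambda = \dfrac{\mathbf{a}^T\mathbf{b}}{\mathbf{a}^T\mathbf{a}}$.

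I would also briefly note, for completeness, that $\mathbf{P}$ so defined is indeed a projection: it is symmetric and idempotent ($\mathbf{P}^2 = \mathbf{P}$), which follows from $(\mathbf{a}\mathbf{a}^T)(\mathbf{a}\mathbf{a}^T) = \mathbf{a}(\mathbf{a}^T\mathbf{a})\mathbf{a}^T = (\mathbf{a}^T\mathbf{a})\,\mathbf{a}\mathbf{a}^T$. This confirms that $\mathbf{p}$ lies in the same one-dimensional subspace as $\mathbf{a}$ (i.e., $\mathbf{p}\in\mathrm{dom}\,\mathbf{U}$ in the paper's notation), consistent with the statement that $\mathbf{p}$ is in the same domain as $\mathbf{a}$.

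Since this is a textbook fact (cited to Strang), there is no real obstacle; the only thing requiring a little care is making the scalar-versus-matrix bookkeeping explicit — distinguishing the inner product $\mathbf{a}^T\mathbf{b}\in\mathbb{R}$ from the outer product $\mathbf{a}\mathbf{a}^T\in\mathbb{R}^{n\times n}$ — and being clear that the hypothesis implicitly requires $\mathbf{a}\neq\mathbf{0}$ for the division to make sense. The connection to the self-attention mechanism drawn in Fig.~\ref{fig: projection}, where $f(\mathbf{Q},\mathbf{K})$ plays the role of the coefficient $\lambda$, is then immediate from the derived expression and does not need separate proof here.
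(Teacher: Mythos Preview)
Your proposal is correct and follows essentially the same route as the paper's proof: impose the orthogonality condition $\mathbf{a}^T(\mathbf{b}-\lambda\mathbf{a})=0$ (the paper writes the error as $\mathbf{e}=\mathbf{p}-\mathbf{b}$, a harmless sign flip), solve for $\lambda=\mathbf{a}^T\mathbf{b}/\mathbf{a}^T\mathbf{a}$, and substitute back to obtain $\mathbf{p}=(\mathbf{a}\mathbf{a}^T/\mathbf{a}^T\mathbf{a})\mathbf{b}$. Your additional remarks on $\mathbf{a}\neq\mathbf{0}$, the inner/outer product bookkeeping, and the idempotence of $\mathbf{P}$ go slightly beyond the paper's argument but are all valid and welcome.
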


\begin{proof}
For any two vectors $\mathbf{a}$ and $\mathbf{b}$, there exists a vector $\mathbf{e} = \mathbf{p}-\mathbf{b}$ such that $\mathbf{e}$ is orthogonal to $\mathbf{a}$. We have the following equation:
\begin{align}
\mathbf{a}^T\mathbf{e}=\mathbf{a}^T(\mathbf{p}-\mathbf{b})=\mathbf{a}^T(\lambda \mathbf{a}-\mathbf{b})=0,
\end{align}
thus, we have 
\begin{align}
\lambda=\frac{\mathbf{a}^T\mathbf{b}}{\mathbf{a}^T\mathbf{a}}.
\label{eq: lambda}
\end{align}
Taking Eq.~\eqref{eq: lambda} into $\mathbf{Pb}=\lambda\mathbf{a} = \mathbf{p}$, we have the conclusion:
\begin{align}
\mathbf{p} = \mathbf{P}\mathbf{b}=\mathbf{a}\lambda=\frac{\mathbf{a}\mathbf{a}^T}{\mathbf{a}^T\mathbf{a}}\mathbf{b}.
\label{eq: Pb1}
\end{align}
\vspace{-5pt}
\end{proof}
\begin{definition}[\cite{dian2019hyperspectral}]\label{subspace}
Assume that $\mathbf{D}\in\mathbb{R}^{S\times L}$ is the subspace and $\mathbf{C}\in\mathbb{R}^{L\times HW}$ is the corresponding coefficients. We have:
\begin{align}
\mathbf{Z}=\mathbf{D}\mathbf{C}.
\label{eq: DC}
\end{align}
\end{definition}
Based on subspace decomposition, we can take spatial and spectral components to accomplish the pansharpening of remote sensing images. According to Lemma~\ref{vector_proj}, we can further determine a specific subspace decomposition in Definition~\ref{subspace}. The subspace represents the projection relationship between vectors $\mathbf{a}$ and $\mathbf{b}$. Interestingly, we find that we can generalize Eq.~\eqref{eq: Pb} and Eq.~\eqref{eq: DC} to the matrix form of the self-attention mechanism, as shown in Fig.~\ref{fig: projection}. In other words, the self-attention mechanism is represented as the vector projection of Eq.~\eqref{eq: Pb}, which is the low-rank subspace decomposition.


\begin{remark}
Back to the pansharpening applications, the input PAN and LrMSI are mapped to a high-dimensional feature space. The features often exhibit significant correlations between frequency bands, while spectral vectors typically reside in a low-dimension subspace. These features can be represented as Eq.~\eqref{eq: DC}. In this way, the characteristics in the image domain can be transformed into the subspace.
\end{remark}



Based on the above analysis, we can build an alternating projection framework, which is summarized in the following theorem.
\begin{theorem}\label{APF}
Assuming that $\mathcal{F}_{spa}\in \mathbb{R}^{H\times W\times S}$ and $\mathcal{F}_{spe}\in \mathbb{R}^{H\times W\times S}$ from the spatial and spectral branches, they can be alternatively projected as follows:
\begin{align}
\mathbf{T}^{spa} &= \text{Softmax}\left(\frac{\mathbf{T} _{a}\mathbf{T} _{b}^T}{\sqrt{S'} } \right)\mathbf{T}_{c}^T,
\label{eq: Tspa}\\
\mathbf{T}^{spe} &=\text{Softmax}\left(\frac{\mathbf{T} _{c}\mathbf{T} _{d}^T}{\frac{\sqrt{(S')^3} }{HW} } \right)\mathbf{T}_{a}^T,\label{eq: Tspe}
\end{align}
where $\mathbf{T}^{spa}\in \mathbb{R}^{HW\times S'}$ and $\mathbf{T}^{spe}\in \mathbb{R}^{S'\times HW}$ denote the features of spatial domain and spectral domain separately.  $\mathbf{T}_{a}\in \mathbb{R}^{HW\times S'}$ and $\mathbf{T}_{b}\in \mathbb{R}^{HW\times S'}$ are features in the spatial domain generated by $\mathcal{F}_{spa}$, where $S'$ is the channel of self-attention. $\mathbf{T}_{c}\in \mathbb{R}^{S'\times HW}$ and $\mathbf{T}_{d}\in \mathbb{R}^{S'\times HW}$ are features in the spectral domain generated by $\mathcal{F}_{spe}$. $\sqrt{S'}$ and $\frac{\sqrt{(S')^3}}{HW}$ are constants related to the matrix size. Softmax($\cdot$) stands for the Softmax function.
\end{theorem}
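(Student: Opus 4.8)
The claim is essentially constructive, so the plan is to derive Eqs.~\eqref{eq: Tspa} and~\eqref{eq: Tspe} by promoting the rank-one vector projection of Lemma~\ref{vector_proj} to the matrix/subspace setting of Definition~\ref{subspace}, once on each branch. First I would rewrite Eq.~\eqref{eq: Pb} as the composition of three ingredients: the rank-one projector $\mathbf{a}\mathbf{a}^{T}$, the scalar normalizer $1/(\mathbf{a}^{T}\mathbf{a})$, and the object $\mathbf{b}$ being projected onto $\mathrm{span}\{\mathbf{a}\}$. Then I replace each ingredient by its matrix analogue on the spatial branch: the two copies of the basis direction $\mathbf{a}$ become two learned linear views $\mathbf{T}_{a},\mathbf{T}_{b}$ of $\mathcal{F}_{spa}$ (both representing the spatial subspace $\mathbf{U}$, exactly as $\mathbf{a}$ occurs twice in $\mathbf{a}\mathbf{a}^{T}$), so $\mathbf{a}\mathbf{a}^{T}$ becomes the self-similarity kernel $\mathbf{T}_{a}\mathbf{T}_{b}^{T}$ of Fig.~\ref{fig: projection}; the object $\mathbf{b}\in\text{dom}\,\mathbf{G}$ becomes the spectral-branch feature, written $\mathbf{T}_{c}^{T}$ so that the inner dimensions conform; and the scalar $1/(\mathbf{a}^{T}\mathbf{a})$ becomes a size-dependent constant together with a row-wise $\text{Softmax}$ --- the standard relaxation that turns the (generally non-scalar) Gram matrix $\mathbf{T}_{a}\mathbf{T}_{b}^{T}$ into an admissible, row-stochastic mixing operator, consistent with the low-dimensional-subspace picture of the Remark. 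Collecting these gives exactly Eq.~\eqref{eq: Tspa}; a shape count ($\mathbf{T}_{a}\mathbf{T}_{b}^{T}\in\mathbb{R}^{HW\times HW}$, $\mathbf{T}_{c}^{T}\in\mathbb{R}^{HW\times S'}$) confirms $\mathbf{T}^{spa}\in\mathbb{R}^{HW\times S'}$.

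Next I would repeat the construction with the two branches interchanged: the spectral subspace is now spanned by the learned views $\mathbf{T}_{c},\mathbf{T}_{d}$ of $\mathcal{F}_{spe}$, the projector is $\mathbf{T}_{c}\mathbf{T}_{d}^{T}\in\mathbb{R}^{S'\times S'}$, and the object projected onto it is the spatial feature $\mathbf{T}_{a}^{T}$. The identical substitution then yields Eq.~\eqref{eq: Tspe} with $\mathbf{T}^{spe}\in\mathbb{R}^{S'\times HW}$; I would note that this is precisely the shape of $\mathbf{Z}=\mathbf{D}\mathbf{C}$ in Definition~\ref{subspace}, i.e.\ a reconstruction written in subspace form, which is what lets the two steps chain into an \emph{alternating} projection: the spectral reconstruction re-enters as a decomposition component on the next pass. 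Composing the two maps is the alternating projection framework asserted in the theorem, so the proof reduces to (i) the substitution argument above and (ii) fixing the two constants.

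The point that needs real care, and that I expect to be the main obstacle, is determining the normalizing constants --- in particular why the spectral logits are divided by $\sqrt{(S')^{3}}/HW$ rather than the $\sqrt{HW}$ one would copy verbatim from scaled dot-product attention. My approach would be to match the scale of the pre-$\text{Softmax}$ logits across branches under the usual zero-mean, unit-scale assumption on the entries of $\mathbf{T}_{a},\dots,\mathbf{T}_{d}$: in the spatial branch $\mathbf{T}_{a}\mathbf{T}_{b}^{T}$ contracts over the $S'$ channels, giving the familiar $\sqrt{S'}$; in the spectral branch the feature $\mathbf{T}_{c}\in\mathbb{R}^{S'\times HW}$ has its token- and channel-axes swapped relative to the spatial case, and the extra factor needed to make the spectral operator commensurable with the spatial one after this transposition (and after the downstream multiplication by $\mathbf{T}_{a}^{T}$) is $S'/HW$, i.e.\ the denominator $\sqrt{S'}\cdot(S'/HW)=\sqrt{(S')^{3}}/HW$. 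I would confirm this by writing out the (co)variance of a single logit entry in each branch and choosing each denominator to normalize it to unit scale; everything else in the proof is the matrix-shape bookkeeping already recorded in the statement.
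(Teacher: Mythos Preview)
Your proposal is correct and follows essentially the same route as the paper: identify $\text{Softmax}(\mathbf{Q}\mathbf{K}^{T}/\sqrt{d_k})$ with $\mathbf{a}\mathbf{a}^{T}/(\mathbf{a}^{T}\mathbf{a})$ and $\mathbf{V}$ with $\mathbf{b}$ from Lemma~\ref{vector_proj}, then instantiate this substitution twice with the roles of $\text{dom}\,\mathbf{U}$ and $\text{dom}\,\mathbf{G}$ exchanged to obtain Eqs.~\eqref{eq: Tspa} and~\eqref{eq: Tspe}. The one place you go beyond the paper is the normalizing constants: you flag the spectral denominator $\sqrt{(S')^{3}}/HW$ as the main obstacle and propose a variance-matching derivation, but the paper's own proof does not derive either constant --- it simply asserts that ``$d$ is a self-attention constant'' and leaves both scalings as stated in the theorem --- so your analysis there is additional rigor rather than a required step.
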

\begin{proof}
According to Lemma~\ref{vector_proj}, we can generalize the self-attention mechanism~\cite{vaswani2017attention}, where $\mathbf{Q}$ and $\mathbf{K}$ are the features from $\text{dom}\mathbf{U}$, and $\mathbf{V}$ is the feature from $\text{dom}\mathbf{G}$, respectively. Thus, we have the following form:
    \begin{align}
    \text{Softmax}(\frac{\mathbf{Q}\mathbf{K}^T}{\sqrt{d_k}}) = \frac{\mathbf{a}\mathbf{a}^T}{\mathbf{a}^T\mathbf{a}}, \,\,
    \mathbf{V}=\mathbf{b}.
    \label{eq: dc_sa}
    \end{align}
Then we transform the projection relationship between the spatial domain ($\text{dom}\mathbf{U}$) and spectral domain ($\text{dom}\mathbf{G}$), where $\mathbf{T}_a, \mathbf{T}_b \in \text{dom}\mathbf{U}$ and $\mathbf{T}_c, \mathbf{T}_d \in \text{dom}\mathbf{G}$. $d$ is a self-attention constant. As a result, the alternating projection is complete from the spatial/spectral domain to the spectral/spatial domain, i.e., Eq.~\eqref{eq: Tspa} and Eq.~\eqref{eq: Tspe}.
\end{proof}
In addition, we need to get fused outputs from $\mathbf{T}^{spa}$ and $\mathbf{T}^{spe}$. Without loss of generality, we have 
\begin{align}
\mathbf{T}^{fus}=\mathbf{T}^{spa}\odot \mathbf{T}^{spe},
\end{align}
where $\odot$ defines element-wise multiplication. Element-wise multiplication is used to fuse spatial and spectral information to obtain $\mathbf{T}^{fus}\in \mathbb{R}^{HW\times S'}$.

Comparing Eq.~\eqref{eq: Pb} with Eq.~\eqref{eq: Tspa} and Eq.~\eqref{eq: Tspe}, this subspace is built by vector projection and naturally decouples spatial and spectral information into the self-attention mechanism. This naturally leads us to apply a fine-tuning method similar to LoRA methods (See details in Sect.~\ref{L-BAF}).




\begin{figure}[t]
    \centering
    \includegraphics[width=\linewidth]{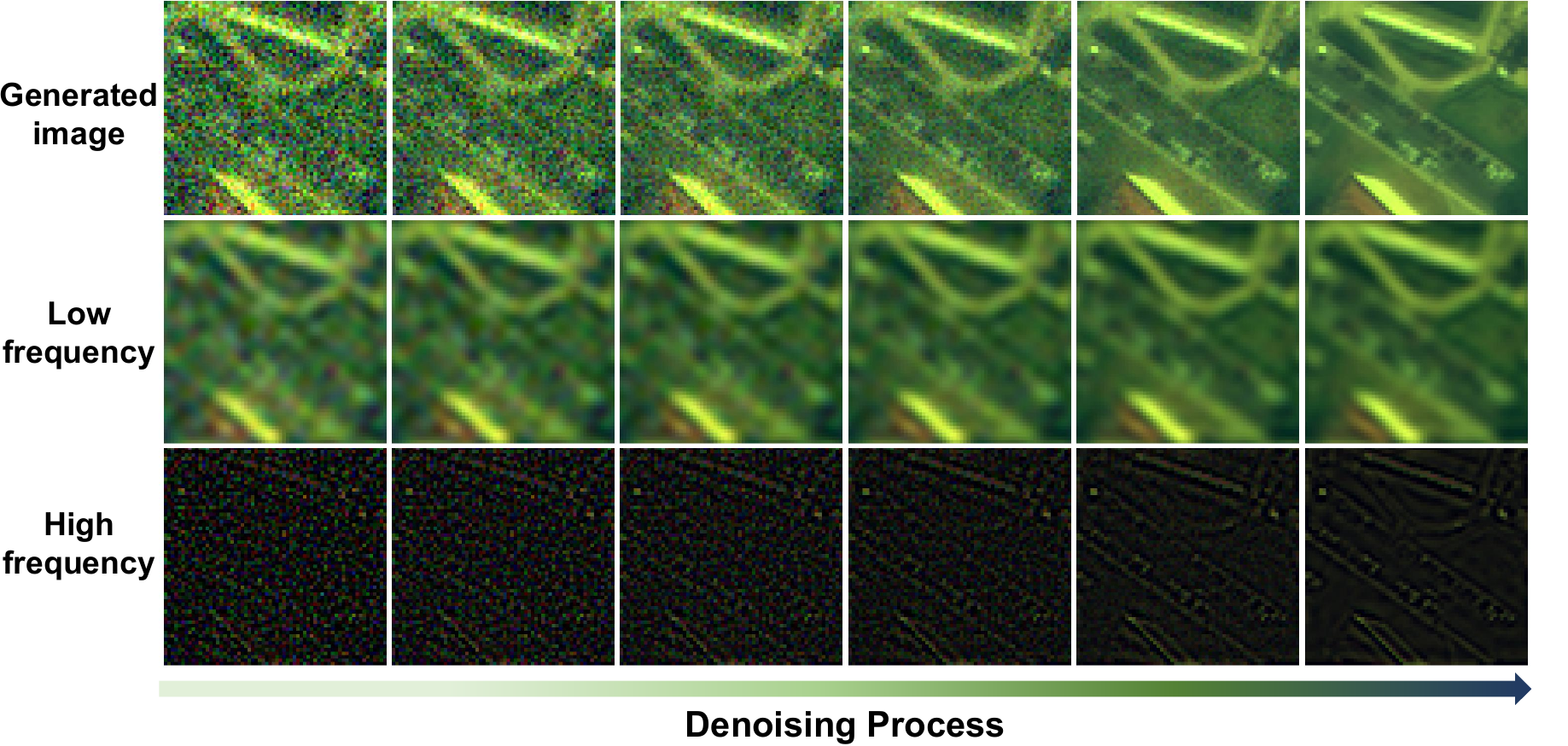}
    \caption{The denoising process. The top row consists of a series of iteratively generated images from the gradual denoising process. The subsequent two rows represent the associated low-frequency and high-frequency spatial domain information obtained through inverse Fourier transform from the denoised image in the first row of each corresponding step. }
    \label{fig:free-U}
\end{figure}
\subsection{Frequency Modulation Inter-branch Module}
Through the APFM, we build an effective fusion module from the characteristics of images. Interestingly, there are some differences between the spatial and spectral components. The spectral branch contains abundant low-frequency information. When low- and high-frequency information from the spatial branch is injected into the spectral branch, it may result in an overemphasis on low-frequency information, impacting the denoising performance of the model. We found that modulating the frequency distribution contributes to SSDiff obtaining better fusion results.


As shown in Fig.~\ref{fig:free-U}, the low-frequency components undergo a gradual modulation characterized by a slow and subtle rate of change in the denoising process. In contrast, the modulation process of the high-frequency components exhibits distinct dynamic variation. Considering the above phenomenon, we design a frequency modulation inter-branch module. Specifically, we utilize a Fourier filter to extract the high-frequency information of the feature map $\mathbf{x}_{spa}$ obtained from the spatial branch, following:
\begin{align}
\mathcal{F\mathbb{} }'(\mathbf{x}_{spa}) = \mathbf{FFT}(\mathbf{x}_{spa})\odot \alpha,
\end{align}
\begin{align}
\mathbf{x}_{spa}' = \mathbf{IFFT}(\mathcal{F\mathbb{} }'(\mathbf{x}_{spa})),
\end{align}
where $\mathbf{FFT}$ and $\mathbf{IFFT}$ are Fourier transform and inverse Fourier transform. $\odot$ denotes element-wise multiplication, and $\alpha$ is a Fourier mask~\cite{si2023freeu}. Furthermore, we find in experiments that directly injecting high-frequency information into spectral branches will cause the frequency information imbalance. To solve this problem, half of the channels of feature $\mathbf{x}_{spec}$ of the spectral branch are multiplied by a constant. For the three different channel numbers in the model, ranging from low to high, we set this constant to 1.2, 1.4, and 1.6, respectively.

\begin{figure}[t]
    \centering
    \includegraphics[width=\linewidth]{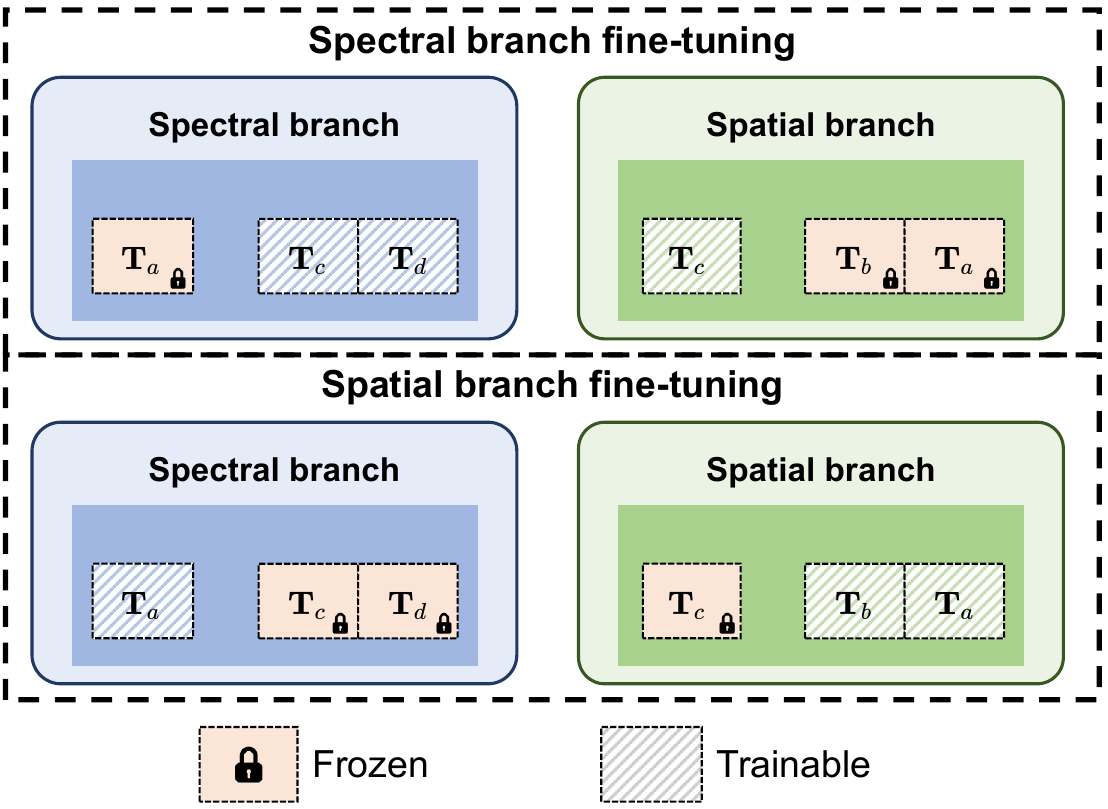}
    \caption{The sketch of the proposed LoRA-like branch-wise alternative fine-tuning process. }
    \label{fig: L-BAF}
\end{figure}

\subsection{LoRA-like Branch-wise Alternative Fine-tuning}\label{L-BAF}
During the model training process, it is crucial to carefully maintain a balance between model underfitting and overfitting. Our SSDiff can hierarchically and discriminatively extract more features. However, achieving a simultaneous balance condition for both spatial and spectral branches during unified training is undoubtedly challenging. Therefore, this paper approaches the LoRA-like alternate fine-tuning of each branch as a feasible solution. As shown in Fig.~\ref{APF}, LoRA methods fine-tune the output of the model by updating the parameters on the fully connected layer weights. Compared with LoRA, the proposed alternating projection method is also a low-rank matrix decomposition. In Fig.~\ref{APF} (b), the difference is that we can have the backpropagation process and control the gradient of the projection process to achieve alternate fine-tuning in the proposed APFM. In practice, taking the fine-tuning of Eq.~\eqref{eq: Tspa} as an example to update the spectral branch, we can detach the gradient propagation at $\mathbf{T}^{spa}$, preventing parameter updates in the spatial branch. In this case, gradients only propagate through the path shown in Eq.~\eqref{eq: Tspe}, which means only the parameters of the spectral branch are updated. Similarly, when fine-tuning the spatial branch, detaching the gradient propagation from the computation graph at $\mathbf{T}^{spe}$ can prevent parameter updates in the spectral branch. 

The L-BAF method alternately fine-tunes the spatial and spectral branches based on the proposed APFM. As shown in Fig.~\ref{fig: L-BAF}.
When we fine-tune the spectral branch, we freeze the $\mathbf{T}_a$ and $\mathbf{T}_b$ parameters in the spatial branch and the $\mathbf{T}_a$ parameter in the spectral branch. Block gradient propagation to prevent parameter updates for the entire spatial branch. Similarly, when fine-tuning the spatial branch, we freeze the $\mathbf{T}_c$ and $\mathbf{T}_d$ parameters in the spectral branch, as well as the $\mathbf{T}_c$ parameter in the spatial branch. Block gradient propagation to prevent parameter updates for entire spectral branches.

\newcommand{\best}[1]{{\textbf{#1}}}
\newcommand{\second}[1]{{\underline{#1}}}
\begin{table*}[!t]
    \centering 
    \setlength{\tabcolsep}{3pt}
    \renewcommand\arraystretch{1.2}
     \resizebox{\linewidth}{!}{
    \begin{tabular}{c|c|cccc|ccc}
        \toprule
        \toprule
        \multirow{2}{*}{Dataset} & \multirow{2}{*}{Method}& \multicolumn{4}{c|}{Reduced resolution} & \multicolumn{3}{c}{Full resolution} \\ & & SAM($\pm$ std) & ERGAS($\pm$ std) & Q$2^n$($\pm$ std) & SCC($\pm$ std) & $D_\lambda$($\pm$ std) & $D_s$($\pm$ std) & HQNR($\pm$ std) \\ \midrule
        \multirow{14}{*}{WV3} & BDSD-PC~\cite{bdsd-pc}         & 5.4675$\pm$1.7185 & 4.6549$\pm$1.4667 & 0.8117$\pm$0.1063 & 0.9049$\pm$0.0419  & 0.0625$\pm$0.0235 & 0.0730$\pm$0.0356 & 0.8698$\pm$0.0531 \\  
        \multirow{14}{*}{} & MTF-GLP-FS~\cite{mtf-glp-fs}      & 5.3233$\pm$1.6548 & 4.6452$\pm$1.4441 & 0.8177$\pm$0.1014 & 0.8984$\pm$0.0466 & \second{0.0206$\pm$0.0082}	& 0.0630$\pm$0.0284 & 0.9180$\pm$0.0346 \\ 
        \multirow{14}{*}{} & BT-H~\cite{BT-H}         & 4.8985$\pm$1.3028 & 4.5150$\pm$1.3315 & 0.8182$\pm$0.1019 & 0.9240$\pm$0.0243 & 0.0574$\pm$0.0232 & 0.0810$\pm$0.0374 & 0.8670$\pm$0.0540 \\
        \multirow{14}{*}{} & PNN~\cite{pnn}             & 3.6798$\pm$0.7625 & 2.6819$\pm$0.6475 & 0.8929$\pm$0.0923 & 0.9761$\pm$0.0075 & 0.0213$\pm$0.0080 & 0.0428$\pm$0.0147 & 0.9369$\pm$0.0212 \\
        \multirow{14}{*}{} & DiCNN~\cite{DICNN}         & 3.5929$\pm$0.7623 & 2.6733$\pm$0.6627 & 0.9004$\pm$0.0871 & 0.9763$\pm$0.0072 & 0.0362$\pm$0.0111	& 0.0462$\pm$0.0175  & 0.9195$\pm$0.0258 \\
        \multirow{14}{*}{} & MSDCNN~\cite{wei2017multi}          & 3.7773$\pm$0.8032 & 2.7608$\pm$0.6884 & 0.8900$\pm$0.0900 & 0.9741$\pm$0.0076 & 0.0230$\pm$0.0091	& 0.0467$\pm$0.0199	& 0.9316$\pm$0.0271 \\
        \multirow{14}{*}{} & FusionNet~\cite{deng2020detail}       & 3.3252$\pm$0.6978 & 2.4666$\pm$0.6446 & 0.9044$\pm$0.0904 & 0.9807$\pm$0.0069 & 0.0239$\pm$0.0090	& 0.0364$\pm$0.0137	&\second{0.9406$\pm$0.0197} \\
        \multirow{14}{*}{} & CTINN~\cite{ctinn} & 3.2523$\pm$0.6436 & 2.3936$\pm$0.5194 & 0.9056$\pm$0.0840 & 0.9826$\pm$0.0046 & 0.0550$\pm$0.0288 & 0.0679$\pm$0.0312 & 0.8815$\pm$0.0488 \\
        \multirow{14}{*}{} & LAGConv~\cite{jin2022lagconv}          & 3.1042$\pm$0.5585 & 2.2999$\pm$0.6128 & 0.9098$\pm$0.0907 & 0.9838$\pm$0.0068 & 0.0368$\pm$0.0148 & 0.0418$\pm$0.0152 &
        0.9230$\pm$0.0247 \\
        \multirow{14}{*}{} & MMNet~\cite{zhou2023memory} & 3.0844$\pm$0.6398 & 2.3428$\pm$0.6260 & \best{0.9155$\pm$0.0855} & 0.9829$\pm$0.0056 & 0.0540$\pm$0.0232 &  \second{0.0336$\pm$0.0115} & 0.9143$\pm$0.0281 \\
        \multirow{14}{*}{} & DCFNet~\cite{dcfnet}          & \second{3.0264$\pm$0.7397} & \second{2.1588$\pm$0.4563} & 0.9051$\pm$0.0881 & \second{0.9861$\pm$0.0038} & 0.0781$\pm$0.0812 & 0.0508$\pm$0.0342 & 0.8771$\pm$0.1005 \\ 
        \multirow{14}{*}{} & PanDiff~\cite{meng2023pandiff} & 3.2968$\pm$0.6010 & 2.4667$\pm$0.5837 & 0.8980$\pm$0.0880 & 0.9800$\pm$0.0063 & 0.0273$\pm$0.0123 & 0.0542$\pm$0.0264 & 0.9203$\pm$0.0360 \\ 
        \multirow{14}{*}{} & SSDiff (ours)  & \best{2.8466$\pm$0.5285} & \best{2.1086$\pm$0.4572} & \second{0.9153$\pm$0.0844} & \best{0.9866$\pm$0.0038} & \best{0.0132$\pm$0.0049} & \best{0.0307$\pm$0.0029} & \best{0.9565$\pm$0.0057}  \\
        \midrule
        \multirow{14}{*}{GF2} & BDSD-PC~\cite{bdsd-pc}       & 1.7110$\pm$0.3210 & 1.7025$\pm$0.4056 & 0.9932$\pm$0.0308 & 0.9448$\pm$0.0166  &0.0759$\pm$0.0301	&0.1548$\pm$0.0280	&0.7812$\pm$0.0409 \\
        \multirow{14}{*}{} &MTF-GLP-FS~\cite{mtf-glp-fs}     & 1.6757$\pm$0.3457 & 1.6023$\pm$0.3545 & 0.8914$\pm$0.0256 & 0.9390$\pm$0.0197 &0.0336$\pm$0.0129&0.1404$\pm$0.0277 &0.8309$\pm$0.0334 \\
        \multirow{14}{*}{} &BT-H~\cite{BT-H}            & 1.6810$\pm$0.3168 & 1.5524$\pm$0.3642 & 0.9089$\pm$0.0292 & 0.9508$\pm$0.0150  &0.0602$\pm$0.0252	&0.1313$\pm$0.0193	&0.8165$\pm$0.0305 \\
        \multirow{14}{*}{} &PNN~\cite{pnn}            & 1.0477$\pm$0.2264 & 1.0572$\pm$0.2355 & 0.9604$\pm$0.0100   & 0.9772$\pm$0.0054  &0.0367$\pm$0.0291	&0.0943$\pm$0.0224	&0.8726$\pm$0.0373 \\
        \multirow{14}{*}{} &DiCNN~\cite{DICNN}          & 1.0525$\pm$0.2310 & 1.0812$\pm$0.2510 & 0.9594$\pm$0.0101    & 0.9771$\pm$0.0058 &0.0413$\pm$0.0128	&0.0992$\pm$0.0131	&0.8636$\pm$0.0165 \\
        \multirow{14}{*}{} &MSDCNN~\cite{wei2017multi}         & 1.0472$\pm$0.2210 & 1.0413$\pm$0.2309 & 0.9612$\pm$0.0108    & 0.9782$\pm$0.0050 &0.0269$\pm$0.0131	&0.0730$\pm$0.0093	&0.9020$\pm$0.0128 \\
        \multirow{14}{*}{} &FusionNet~\cite{deng2020detail}      &  0.9735$\pm$0.2117    & 0.9878$\pm$0.2222   & 0.9641$\pm$0.0093 & 0.9806$\pm$0.0049 &0.0400$\pm$0.0126	&0.1013$\pm$0.0134	&0.8628$\pm$0.0184 \\
        \multirow{14}{*}{} &CTINN~\cite{ctinn} & 0.8251$\pm$0.1386 & 0.6995$\pm$0.1068 & 0.9772$\pm$0.0117 & 0.9803$\pm$0.0015 & 0.0586$\pm$0.0260 & 0.1096$\pm$0.0149 & 0.8381$\pm$0.0237 \\
        \multirow{14}{*}{} &LAGConv~\cite{jin2022lagconv}          & \second{0.7859$\pm$0.1478}     & \second{0.6869$\pm$0.1125}    & \second{0.9804$\pm$0.0085} & \second{0.9906$\pm$0.0019} &0.0324$\pm$0.0130	&0.0792$\pm$0.0136	&0.8910$\pm$0.0204 \\
        \multirow{14}{*}{} &MMNet~\cite{zhou2023memory} & 0.9929$\pm$0.1411 & 0.8117$\pm$0.1185 & 0.9690$\pm$0.0204 & 0.9859$\pm$0.0024 & 0.0428$\pm$0.0300 & 0.1033$\pm$0.0129 & 0.8583$\pm$0.0269 \\
        \multirow{14}{*}{} & DCFNet~\cite{dcfnet}          & 0.8896$\pm$0.1577     & 0.8061$\pm$0.1369   & 0.9727$\pm$0.0100 & 0.9853$\pm$0.0024 & \second{0.0234$\pm$0.0116} & \second{0.0659$\pm$0.0096} & \second{0.9122$\pm$0.0119} \\ 
        \multirow{14}{*}{} &PanDiff~\cite{meng2023pandiff} & 0.8881$\pm$0.1197 & 0.7461$\pm$0.1032 & 0.9792$\pm$0.0097 & 0.9887$\pm$0.0020 & 0.0265$\pm$0.0195 & 0.0729$\pm$0.0103 & 0.9025$\pm$0.0209 \\
        \multirow{14}{*}{} &SSDiff (ours)  & \best{0.6694$\pm$0.1244} & \best{0.6038$\pm$0.1080} & \best{0.9836$\pm$0.0074} & \best{0.9915$\pm$0.0017} & \best{0.0164$\pm$0.0093} & \best{0.0267$\pm$0.0071} & \best{0.9573$\pm$0.0100} \\ \midrule
        &Ideal value    & \textbf{0}   & \textbf{0}  & \textbf{1}  & \textbf{1}  & \textbf{0}   & \textbf{0}  & \textbf{1} \\
        \bottomrule
        \bottomrule
    \end{tabular}
    }
    \caption{Result on the WV3 and GF2 reduced-resolution and full-resolution datasets. Some conventional methods (the first three rows) and the DL-based approaches are compared. The best results are highlighted in bold and the second best results are underlined.}
    \label{tab: wv3_gf2_reduced_full}
\end{table*}

\begin{figure*}[t]
    \centering
    \includegraphics[width=\linewidth]{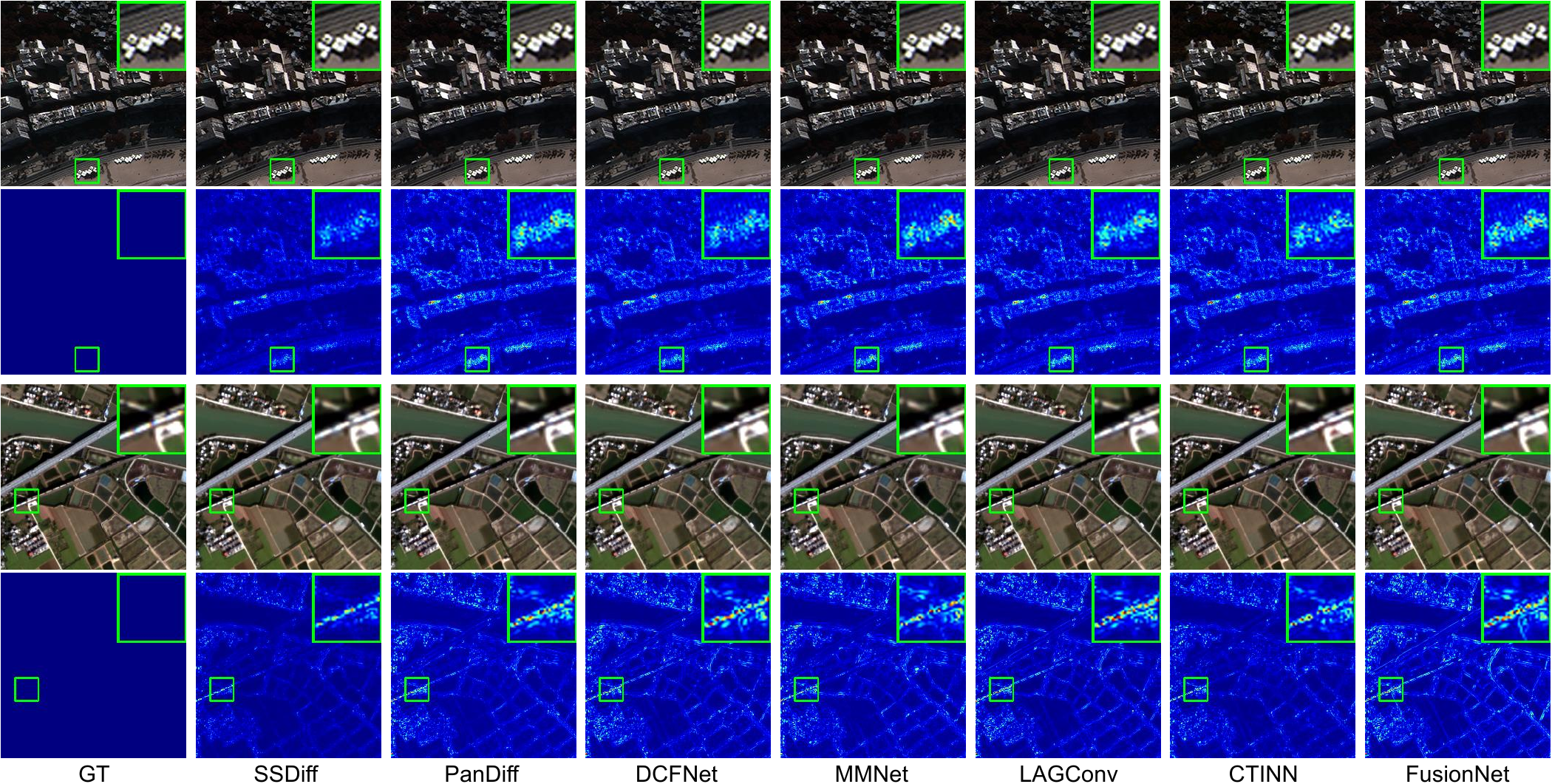}
    \caption{Visual comparisons on a reduced-resolution WorldView-3 and GaoFen-2 case. The first two rows are the results of WV3, and the last two rows are the results of GF2. The first and third rows are the predicted HrMSI for each method, and the second and fourth rows are the error maps of the predicted HRMS versus ground truth (GT) for each one.}
    \label{fig: reduced_wv3_gf2}
\end{figure*}

\begin{figure*}[t]
    \centering
    \includegraphics[width=\linewidth]{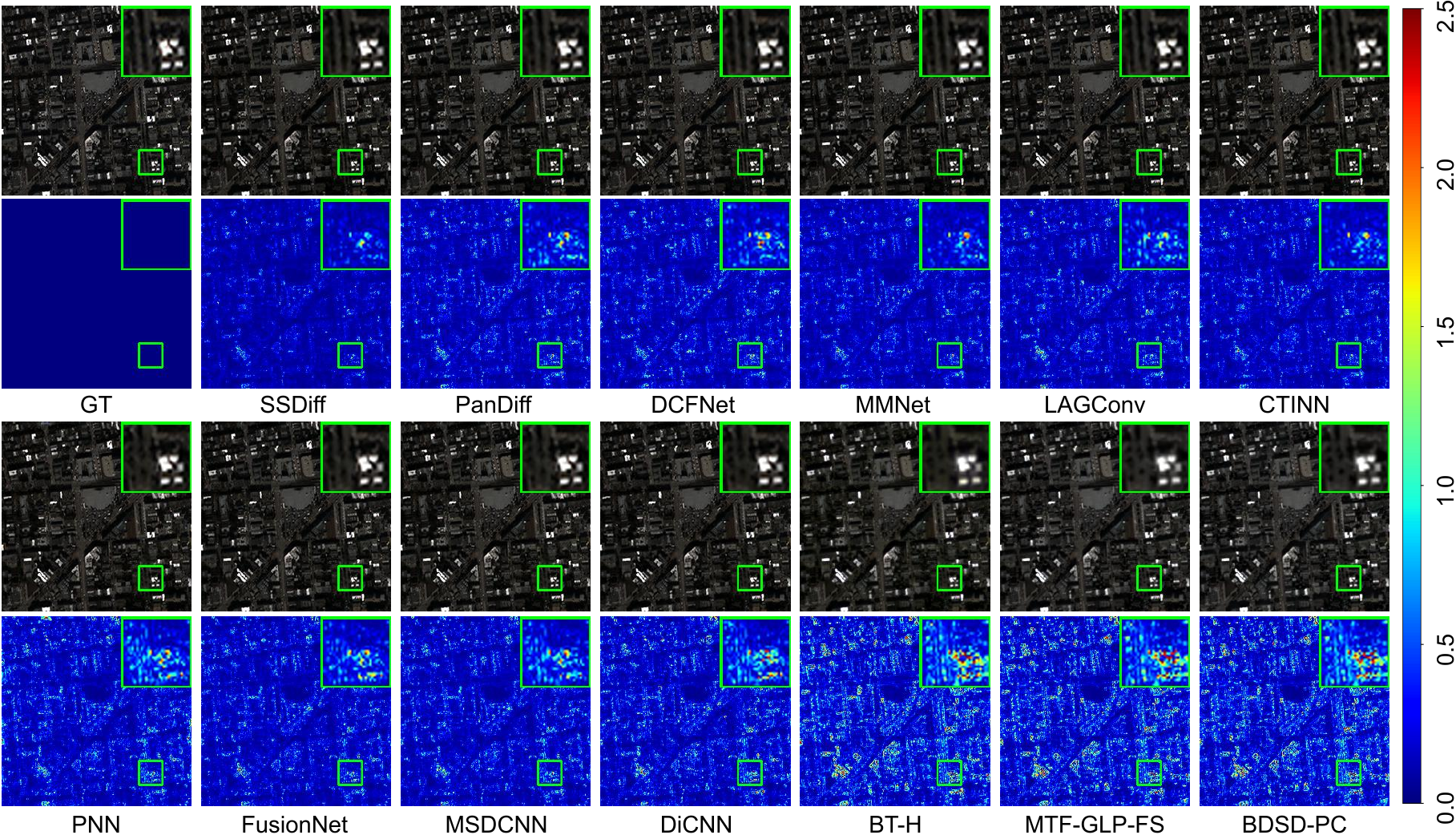}
    \caption{Visual comparisons on a reduced-resolution QuickBird case. The first and third rows are the predicted HrMSI for each method, and the second and fourth rows are the error maps of the predicted HRMS versus ground truth (GT) for each one.}
    \label{fig: reduced_qb}
\end{figure*}

\begin{figure*}[t]
    \centering
    \includegraphics[width=\linewidth]{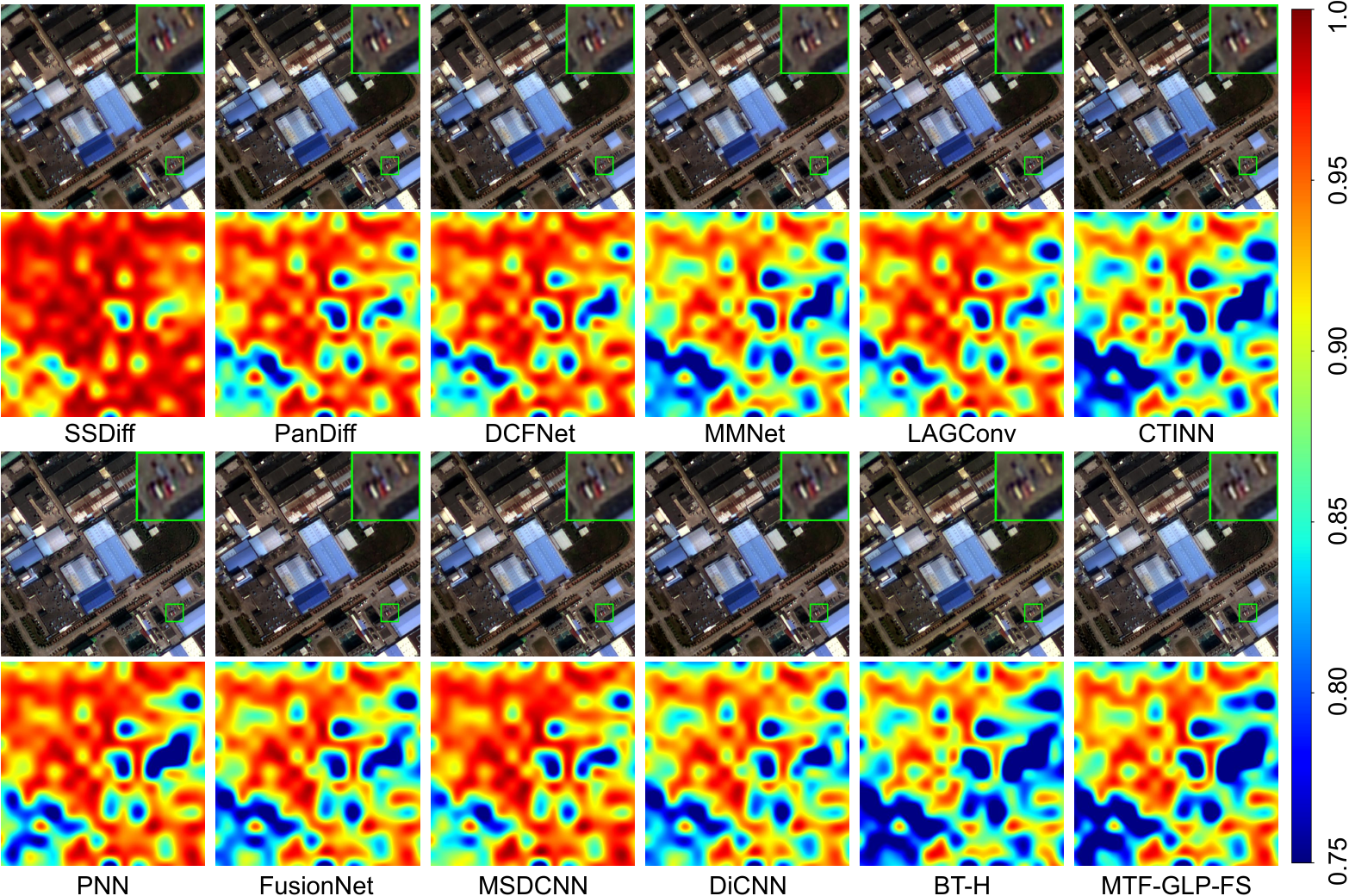}
    \caption{Fused GF2 full-resolution data and their corresponding HQNR map. The high value in the HQNR map means better full-resolution fusion performance.}
    \label{fig: full_gf2}
\end{figure*}

\begin{table*}[!ht]
    \centering 
    \resizebox{\linewidth}{!}{
    \begin{tabular}{l|cccc|ccc}
        \toprule
        \multirow{2}{*}{Method}& \multicolumn{4}{c|}{Reduced resolution} & \multicolumn{3}{c}{Full resolution} \\  & SAM($\pm$ std) & ERGAS($\pm$ std) & Q4($\pm$ std) & SCC($\pm$ std) & $D_\lambda$($\pm$ std) & $D_s$($\pm$ std) & QNR($\pm$ std) \\
        \midrule
        BDSD-PC~\cite{bdsd-pc}       & 8.2620$\pm$2.0497 & 7.5420$\pm$0.8138 & 0.8323$\pm$0.1013 & 0.9030$\pm$0.0181 & 0.1975$\pm$0.0334	&0.1636$\pm$0.0483	&0.6722$\pm$0.0577   \\  
        MTF-GLP-FS~\cite{mtf-glp-fs}      & 8.1131$\pm$1.9553 & 7.5102$\pm$0.7926 & 0.8296$\pm$0.0905 & 0.8998$\pm$0.0196 & 0.0489$\pm$0.0149	&0.1383$\pm$0.0238	&0.8199$\pm$0.0340 \\ 
        BT-H~\cite{BT-H}            & 7.1943$\pm$1.5523 & 7.4008$\pm$0.8378 & 0.8326$\pm$0.0880 & 0.9156$\pm$0.0152 & 0.2300$\pm$0.0718	&0.1648$\pm$0.0167	&0.6434$\pm$0.0645   \\ \midrule
        PNN~\cite{pnn}             & 5.2054$\pm$0.9625 & 4.4722$\pm$0.3734 & 0.9180$\pm$0.0938 & 0.9711$\pm$0.0123 & 0.0569$\pm$0.0112	&0.0624$\pm$0.0239	& 0.8844$\pm$0.0304  \\
        DiCNN~\cite{DICNN}           & 5.3795$\pm$1.0266 & 5.1354$\pm$0.4876 & 0.9042$\pm$0.0942 & 0.9621$\pm$0.0133  & 0.0920$\pm$0.0143	&0.1067$\pm$0.0210	&0.8114$\pm$0.0310 \\
        MSDCNN~\cite{wei2017multi} & 5.1471$\pm$0.9342 & 4.3828$\pm$0.3400 & 0.9188$\pm$0.0966 & 0.9689$\pm$0.0121 & 0.0602$\pm$0.0150 & 0.0667$\pm$0.0289 & 0.8774$\pm$0.0388 \\
        FusionNet~\cite{deng2020detail}       & 4.9226$\pm$0.9077 & 4.1594$\pm$0.3212 & 0.9252$\pm$0.0902 & 0.9755$\pm$0.0104 & 0.0586$\pm$0.0189	&\second{0.0522$\pm$0.0088}	&\second{0.8922$\pm$0.0219} \\
        CTINN~\cite{ctinn} & 4.6583$\pm$0.7755 & 3.6969$\pm$0.2888 & 0.9320$\pm$0.0072 & \second{0.9829$\pm$0.0072} & 0.1738$\pm$0.0332 & 0.0731$\pm$0.0237 & 0.7663$\pm$0.0432 \\
        LAGConv~\cite{jin2022lagconv}          & 4.5473$\pm$0.8296 & 3.8259$\pm$0.4196 & 0.9335$\pm$0.0878 & 0.9807$\pm$0.0091 &0.0844$\pm$0.0238	&0.0676$\pm$0.0136	&0.8536$\pm$0.0178 \\
        MMNet~\cite{zhou2023memory} & 4.5568$\pm$0.7285 & \second{3.6669$\pm$0.3036} & 0.9337$\pm$0.0941 & \best{0.9829$\pm$0.0070} & 0.0890$\pm$0.0512 & 0.0972$\pm$0.0382 & 0.8225$\pm$0.0319 \\
        DCFNet~\cite{dcfnet}          & \second{4.5383$\pm$0.7397} & 3.8315$\pm$0.2915 & 0.9325$\pm$0.0903 & 0.9741$\pm$0.0101 & \second{0.0454$\pm$0.0147} & 0.1239$\pm$0.0269 & 0.8360$\pm$0.0158 \\ \midrule
        PanDiff~\cite{meng2023pandiff} & 4.5754$\pm$0.7359 & 3.7422$\pm$0.3099 & \second{0.9345$\pm$0.0902} &  0.9818$\pm$0.0902 & 0.0587$\pm$0.0223 & 0.0642$\pm$0.0252 & 0.8813$\pm$0.0417 \\
        SSDiff (ours)  & \best{4.4640$\pm$0.7473} & \best{3.6320$\pm$0.2749} & \best{0.9346$\pm$0.0943} & 0.9829$\pm$0.0080 & \best{0.0314$\pm$0.0108} &\best{0.0360$\pm$0.0133} & \best{0.9338$\pm$0.0208} \\ 
        \midrule
        Ideal value    & \textbf{0}   & \textbf{0}  & \textbf{1}  & \textbf{1}  & \textbf{0}   & \textbf{0}  & \textbf{1} \\
        \bottomrule
    \end{tabular}
    }
    \caption{Quantitative results on the QuickBird reduced-resolution and full-resolution datasets. Some conventional methods (the first three rows) and DL-based methods are compared.  The best results are highlighted in bold and the second best results are underlined.}
    \label{tab: qb_reduced_full}
\end{table*}


\section{Experiments}

To fairly evaluate SSDiff and other state-of-the-art methods, we conducted experiments on both reduced- and full-resolution datasets to obtain reference and non-reference metrics.

\textbf{Implementation Details:}
Our SSDiff is implemented in PyTorch 1.7.0 and Python 3.8.5 using AdamW optimizer with an initial learning rate of 0.001 to minimize $\mathcal{L}_{simple}$ on a Linux operating system with two NVIDIA GeForce RTX3090 GPUs. For the diffusion denoising model, the initial number of model channels is 32, the diffusion time step used for training in the pansharpening is set to 1000, while the diffusion time step for sampling is set to 100. The exponential moving average (EMA) ratio is set to 0.9999. The total training iterations for the WV3, GF2, and QB datasets are set to 150k, 100k, 
and 200k iterations, respectively. The batch size is 12. During the model fine-tuning, the learning rate is set to 0.0001, and the total fine-tune training iterations are set to 30k.\\
\textbf{Datasets:} 
To demonstrate the effectiveness of our SSDiff, we conducted experiments on widely used pansharpening datasets. The Pancollection\footnote[1] {\url{https://liangjiandeng.github.io/PanCollection.html}.} dataset for pansharpening consists of data from three satellites: WorldView-3 (8 bands), QuickBird (4 bands), and GaoFen-2 (4 bands). 

To better evaluate performance, we simulated reduced-resolution and full-resolution datasets. For reduced datasets, Pancollection follows Wald's protocol~\cite{wald1997fusion} to obtain simulated images with ground truth images. There are three steps involved: 1) Use a modulation transfer-based (MTF) filter to downsample the original PAN and MS images by a factor of 4. Downsampled PAN and MS are used as training PAN and MS images; 2) Treat the original MS image as a ground truth image, i.e. HRMSI; 3) Upsampling the training MS image using interpolation with polynomial kernels~\cite{aiazzi2002context} and processing it into a LrMSI. When processing the full datasets, the original MS image is considered MS, the upsampled MS image is considered LrMSI, and the original PAN is considered PAN.

\textbf{Benchmark:} 
To evaluate the performance of our SSDiff, we compared it with various state-of-the-art methods of Pansharpening (on WV3, QB, and GF2 datasets). Specifically, we choose three traditional methods: BDSD-PC \cite{bdsd-pc}, MTF-GLP-FS \cite{mtf-glp-fs}, BT-H \cite{BT-H}; as well as nine machine learning-based methods: PNN \cite{pnn}, DiCNN \cite{DICNN}, MSDCNN \cite{wei2017multi}, FusionNet \cite{deng2020detail}, CTINN \cite{ctinn}, LAGConv \cite{jin2022lagconv}, MMNet \cite{zhou2023memory}, DCFNet \cite{dcfnet}, and Pandiff \cite{meng2023pandiff}. To ensure fairness, we train DL-based methods using the same Nvidia GPU-3090 and PyTorch environment.\\
\textbf{Quality Metric:} 
For the reduced data in Pansharpening tasks, we utilize four metrics to evaluate the results on reduced resolution datasets, including the spectral angle mapper (SAM)~\cite{yuhas1992discrimination}, the erreur relative globale adimensionnelle de synthèse (ERGAS)~\cite{wald2002data}, the universal image quality index (Q$2^n$)~\cite{garzelli2009hypercomplex}, and the spatial correlation coefficient (SCC)~\cite{zhou1998wavelet}. As for full-resolution datasets, we apply $D_\lambda $, $D_s$, and hybrid quality with no reference (HQNR) indexes \cite{arienzo2022full} for evaluation.

\subsection{Experimental Results}
\noindent\textbf{Results on WorldView-3:}  
On the WorldView-3 dataset, we evaluate the performance of SSDiff using 20 test images. The results for both reduced-resolution and full-resolution are presented in Table~\ref{tab: wv3_gf2_reduced_full}. The running time of a single picture during the sampling process is 89.136 seconds. We compared our method with three traditional methods and some SOTA DL-based methods. To illustrate the performance of each method more clearly, we presented the fusion result images and error maps of all these methods in Fig. \ref{fig: reduced_wv3_gf2}, and zoomed in on a specific location. On average, our method achieves SOTA performance on the reduced dataset, with our SSDiff reaching 2.84 (SAM) and 2.10 (ERGAS) metrics, outperforming all DL-based methods. The error map indicates that the images sampled by SSDiff are closer to the ground truth (GT). SSDiff achieves SOTA performance in obtaining full-resolution images on the WV3 dataset. The HQNR score close to 1 indicates better fusion quality of the full-resolution images. The obtained results demonstrate that our SSDiff can fuse HrMSI, reducing spatial and spectral distortions, thereby proving its excellent generalization ability at full resolution.\\
\textbf{Results on GaoFen-2:} 
On the GaoFen-2 reduced dataset, we tested our SSDiff on 20 test images, as shown in Table \ref{tab: wv3_gf2_reduced_full}. Our SSDiff achieving SOTA performance. From the error maps in Fig.~\ref{fig: reduced_wv3_gf2}, we can observe that there are still significant differences between traditional fusion methods and DL-based fusion methods. These experiments demonstrate that, compared to other DL-based methods used for comparison in the experiments, the proposed SSDiff exhibits superior spatial performance and effectively preserves spectral information.
On the GaoFen-2 full-resolution dataset, we tested our SSDiff on 20 test images,
Fig.~\ref{fig: full_gf2} shows the results and HQNR maps, where an HQNR score close to 1 indicates better fusion quality of full-resolution images. The obtained results indicate that our SSDiff has a good generalization of the full-resolution dataset.

\noindent\textbf{Results on QuickBird:}  
We conduct experiments on the QuickBird reduced dataset and evaluate the performance of SSDiff. Similarly, the reference and non-reference metrics were obtained from 20 randomly selected test images from the QB dataset. The performance comparison is reported in Table \ref{tab: qb_reduced_full}. Our SSDiff achieves SOTA performance. From the error maps in Fig.~\ref{fig: reduced_qb}, we can observe that there are still significant differences between traditional fusion methods and DL-based fusion methods. 

\subsection{Ablation Study}
We conduct an ablation study on proposed modules and techniques to verify their effectiveness.\\
\textbf{Effectiveness of Decoupling Branches: }To investigate the effectiveness of the spatial and spectral branch design, we perform ablation experiments by training the diffusion model under the following conditions: V1) Coupling inputs, and training only with the spatial branch. V2) Coupling inputs and training only with the spectral branch. V3) Spatial and spectral branch structure and decoupling inputs, only the outputs of the two branches are concatenated without any inter-branch information interaction. V4) Spatial and spectral branch structure and decoupling inputs, replace each APFM with an additional operation, i.e., the way of LoRA. V5) Our method.
The results on the WV3 reduced dataset are reported in Table~\ref{tab: Ablation_branch}. Training solely with a single branch significantly reduces the quality of the HrMSI.  It can be seen that SSDiff is more suitable for pansharpening tasks than LoRA way and concatenating way.\\
\textbf{Frequency Modulation Inter-branch Module: }
To validate the effectiveness of the FMIM, we remove FMIM from our model and train the diffusion model to converge the WV3 dataset. The results are shown in Table~\ref{tab: Ablation_FIM}. Without using FMIM, the model's performance on the SAM/ERGAS/Q8 indicators decreased by approximately 3.1\%/2.8\%/1\%, respectively. This demonstrates that utilizing FMIM for frequency transfer can effectively improve model performance.

\begin{table}[!ht]
    \centering 
    \setlength{\tabcolsep}{3pt}
    \renewcommand\arraystretch{1.2}
    \resizebox{\linewidth}{!}{
    \begin{tabular}{c|cccc|c}
        \toprule
        Method  & SAM($\pm$ std) & ERGAS($\pm$ std) & Q8($\pm$ std) & SCC($\pm$ std) & Params \\
        \midrule
     V1 & 3.3612$\pm$0.6497 & 2.5633$\pm$0.6249 & 0.8960$\pm$0.1031 & 0.9816$\pm$0.0070 & 1100K\\
     V2 &{3.0598$\pm$0.5560} &{2.2638$\pm$0.5663} &{0.9097$\pm$0.0947} &{0.9847$\pm$0.0062} & \bf{654K}\\
     V3 &{3.4040$\pm$0.6088} &{2.5564$\pm$0.6737} &{0.9023$\pm$0.0954} &{0.9796$\pm$0.0078} & 1420K\\
     V4 &{3.4871$\pm$0.5821} &{2.4580$\pm$0.5885} &{0.8928$\pm$0.0959} &{0.9809$\pm$0.0061} & 1249K\\
     V5 & \bf{2.8646$\pm$0.5241} & \bf{2.1217$\pm$0.4671} & \bf{0.9125$\pm$0.0874} & \bf{0.9863$\pm$0.0040} & 1420K\\
        \bottomrule
    \end{tabular}
    }
    \caption{Ablation study on 20 reduced-resolution samples acquired by WV3 dataset, where SSDiff can be trained without fine-tuning.}
    \label{tab: Ablation_branch}
\end{table}

\begin{table}[!ht]
    \centering 
    \setlength\tabcolsep{2pt} 
	\footnotesize
    \resizebox{\linewidth}{!}{
    \begin{tabular}{c|cccc}
        \toprule
        FMIM & SAM($\pm$ std) & ERGAS($\pm$ std) & Q8($\pm$ std) & SCC($\pm$ std) \\
        \midrule
       \usym{2717} & 2.9798$\pm$0.6060 & 2.1978$\pm$0.5399 & 0.9153$\pm$0.0868 & 0.9855$\pm$0.0053\\
     \usym{2713} & \bf{2.8646$\pm$0.5241} & \bf{2.1217$\pm$0.4671} & \bf{0.9125$\pm$0.0874} & \bf{0.9863$\pm$0.0040}\\
        \bottomrule
    \end{tabular}
    }
    \caption{Ablation study on 20 reduced-resolution samples acquired by WV3 dataset, where SSDiff can be trained without fine-tuning.}
    \label{tab: Ablation_FIM}
\end{table}
\subsection{Discussion}
\textbf{Generalization: }To test the generalization ability of DL-based methods, we evaluated models trained on the WV3 dataset using 20 reduced resolutions from the WorldView-2 dataset. The quantitative evaluation results, as reported in Table~\ref{tab: wv2_reduced_full}, demonstrate that the SSDiff method achieves the best results across all four evaluation metrics. This indicates that our approach possesses a powerful generalization ability.

\begin{table}[!ht]
    \centering 
    \setlength{\tabcolsep}{3pt}
    \renewcommand\arraystretch{1.2}
    \resizebox{\linewidth}{!}{
    \begin{tabular}{l|cccc}
        \toprule
        Method & SAM($\pm$ std) & ERGAS($\pm$ std) & Q8($\pm$ std) & SCC($\pm$ std) \\
        \midrule
     PNN &7.1158$\pm$1.6812& 5.6152$\pm$0.9431& 0.7619$\pm$0.0928&0.8782$\pm$0.0175 \\
     DiCNN &6.9216$\pm$0.7898 &6.2507$\pm$0.5745 &0.7205$\pm$0.0746 &0.8552$\pm$0.0289 \\
     MSDCNN &6.0064$\pm$0.6377 &4.7438$\pm$0.4939 &0.8241$\pm$0.0799 &0.8972$\pm$0.0109 \\
     FusionNet &6.4257$\pm$0.8602 &5.1363$\pm$0.5151 &0.7961$\pm$0.0737 & 0.8746$\pm$0.0134 \\
     CTINN & 6.4103$\pm$0.5953 & 4.6435$\pm$0.3792 & 0.8172$\pm$0.0873 & 0.9147$\pm$0.0102 \\
     LAGConv &6.9545$\pm$0.4739 &5.3262$\pm$0.3185 &0.8054$\pm$0.0837 &0.9125$\pm$0.0101 \\
     MMNet & 6.6109$\pm$0.3209 & 5.2213$\pm$0.2133 & 0.8143$\pm$0.0790 & 0.9136$\pm$0.0201 \\
     DCFNet &\second{5.6194$\pm$0.6039} &\second{4.4887$\pm$0.3764} &\second{0.8292$\pm$0.0815} &\second{0.9154$\pm$0.0083} \\
     SSDiff (ours) &\best{5.0647$\pm$0.5634} & \best{3.9885$\pm$0.4297} & \best{0.8577$\pm$0.0782} & \best{0.9335$\pm$0.0055} \\
        \bottomrule
    \end{tabular}
    }
    \caption{Generalization of DL-based methods on WV2 dataset.}
    \label{tab: wv2_reduced_full}
\end{table}

\noindent\textbf{Training SSDiff: }
To address the issue of insufficient local parameter training in the dual branches model, we design the L-BAF method to fine-tune the model. Taking the experiments on the reduced WV3 dataset, we first train the model without fine-tuning until convergence and perform branch-wise fine-tuning, which includes: 1) Only fine-tuning the spatial branch parameters with the spectral branch parameters fixed. 2) Only fine-tuning the spectral branch parameters with the spatial parameters fixed. 3) Alternating fine-tuning spectral branch and spatial branch. The quantitative results are shown in Table~\ref{tab: discuss_lora}, and all three branch-wise fine-tuning methods lead to the improved testing performance of the model. Fine-tuning the spatial branch parameters alone results in a smaller improvement, while the alternating fine-tuning approach showed the most significant performance improvement. This demonstrates the effectiveness of the L-BAF.

\begin{table}[!ht]
    \setlength\tabcolsep{2pt} 
	\footnotesize
	\resizebox{\linewidth}{!}{
		\begin{tabular}{c|ccccc}
			\toprule
			{$\mathcal{S}$\ \ \ $\mathcal{F}$} 
            & SAM($\pm$ std) & ERGAS($\pm$ std) & Q8($\pm$ std) & SCC($\pm$ std) \\ 
            \midrule
			{\usym{2717}\ \ \ \usym{2717}} & 2.8646$\pm$0.5241 & 2.1217$\pm$0.4671 & 0.9125$\pm$0.0874 & 0.9863$\pm$0.0040 \\
            {\usym{2713}\ \ \ \usym{2717}}& 2.8545$\pm$0.5244 & 2.1138$\pm$0.4658 & 0.9143$\pm$0.0857 & 0.9864$\pm$0.0040  \\
            {\usym{2717}\ \ \ \usym{2713}}& \bf{2.8460$\pm$0.5232} & 2.1132$\pm$0.4671 & 0.9152$\pm$0.0849 & 0.9864$\pm$0.0041 \\
            {\usym{2713}\ \ \ \usym{2713}}& 2.8466$\pm$0.5285 & \bf{2.1086$\pm$0.4572} & \bf{0.9153$\pm$0.0844} & \bf{0.9866$\pm$0.0038} \\
			\bottomrule
	\end{tabular}}
     \caption{Fine-tuning of SSDiff. $\mathcal S$ denotes fine-tuning of the spatial branch, while $\mathcal F$ represents fine-tuning of the spectral branch.}\label{tab: discuss_lora}
\end{table}
\vspace{-3pt}
\section{Conclusion}

In this paper, we propose a spatial-spectral integrated diffusion model for remote sensing pansharpening, named SSDiff. We design a spatial-spectral integrated model architecture, which utilizes spatial and spectral branches to learn spatial details and spectral features separately. By introducing vector projection, the spatial and spectral components in the subspace decomposition are further specified in the proposed APFM. Then, the self-attention mechanism is naturally generalized to the APFM. Furthermore, we propose an FMIM to modulate the frequency distribution between branches. Finally, the two branches of SSDiff can capture discriminating features. It is interesting that, when utilizing the proposed L-BAF method in the APFM, the two branches can be updated alternately, and then SSDiff produces more satisfactory results. We compare our SSDiff with several SOTA pansharpening methods on the WorldView-3, QuickBird, GaoFen-2, and WorldView-2 datasets. The results demonstrate the superiority of SSDiff both visually and quantitatively.


\bibliographystyle{named}
\bibliography{ijcai24}

\end{document}